\renewcommand{\phi}{\ensuremath{\varphi}}
\newcommand{\whp}{w.h.p.\xspace}
\newcommand{\wrt}{w.r.t.\xspace}
\newcommand{\ie}{i.e.\xspace}
\newcommand{\pimp}{p_{\text{imp}}}
\newcommand{\onemax}{\textsc{OneMax}\xspace}
\newcommand{\zeromax}{\textsc{ZeroMax}\xspace}
\newcommand{\OM}{\textsc{OM}\xspace}
\newcommand{\ZM}{\textsc{ZM}\xspace}
\newcommand{\disOM}{\textsc{disOM}\xspace}
\newcommand{\cliff}{\textsc{Cliff}\xspace}
\newcommand{\hurdle}{\textsc{Hurdle}\xspace}
\newcommand{\BBFunnel}{\textsc{BBFunnel}\xspace}
\newcommand{\oea}{${(1+1)}$-EA\xspace}
\newcommand{\sea}{SA-${(1, \lambda)}$-EA\xspace}
\newcommand{\sear}{SA-${(1, \lambda)}$-EA with resets\xspace}
\newcommand{\olea}{$(1 + \lambda)$-EA\xspace}
\newcommand{\oclea}{$(1 , \lambda)$-EA\xspace}
\newcommand{\N}{{\mathbb N}}
\newcommand\eps{\varepsilon}
\newcommand{\CC}{{\mathcal C}}
\newcommand{\CD}{{\mathcal D}}
\begin{document}
\title{Self-Adjusting Evolutionary Algorithms Are Slow on Multimodal Landscapes}
\titlerunning{Self-Adjusting Offspring Populations Size}
%
\author{Johannes Lengler \and Konstantin Sturm}
\authorrunning{J. Lengler, K. Sturm}
%
\institute{Department of Computer Science, ETH Zürich, Zürich, Switzerland\\
\email{\{johannes.lengler,konstantin.sturm\}@inf.ethz.ch}}
\maketitle              
\begin{abstract}
The one-fifth rule and its generalizations are a classical parameter control mechanism in discrete domains. They have also been transferred to control the offspring population size of the \oclea. This has been shown to work very well for hill-climbing, and combined with a restart mechanism it was recently shown by Hevia Fajardo and Sudholt to improve performance on the multi-modal problem \cliff drastically. 

In this work we show that the positive results do not extend to other types of local optima. On the distorted OneMax benchmark, the self-adjusting \oclea is slowed down just as elitist algorithms because self-adaptation prevents the algorithm from escaping from local optima. This makes the self-adaptive algorithm considerably worse than good static parameter choices, which do allow to escape from local optima efficiently. We show this theoretically and complement the result with empirical runtime results. 

\keywords{evolutionary algorithm \and comma selection \and parameter control \and population size \and one-fifth rule  \and fixed-target \and runtime analysis.}
\end{abstract}
\section{Introduction}

Evolutionary algorithms (EAs) are a class of randomized optimization heuristics that are popular because they are flexible and can be widely applied. It is desirable for such general-purpose optimization algorithms to be as easy to use as possible. Thus, an important goal in designing EAs is to reduce the number of hyper-parameters that need to be set by the user. A convenient way is to make the algorithms \emph{self-adjusting}, i.e., to add mechanisms that dynamically adapt the hyper-parameters in an automatic way. This approach has some other advantages. Sometimes there is no static parameter setting which is optimal throughout the whole optimization process, in which case self-adjusting mechanisms can be superior~\cite{doerr2021runtime,doerr2021self,hevia2024cliff}. 

A self-adaptation mechanism that has received increasing attention in recent years is the \emph{one-fifth rule} and its generalization, the $(1:s+1)$-rule. This is a classical rule in the domain of continuous optimization~\cite{rechenberg1978evolutionsstrategien}, but in the last years it has also been successfully transferred to discrete domains~\cite{doerr2021self,hevia2024cliff,hevia2023success,kaufmann2023hardest,kaufmann2023self,kaufmann2024onemax}, see also the reviews in~\cite{doerr2020theory} and~\cite{hevia2023success}. The $(1:s+1)$ rule may be used to control a hyper-parameter that regulates the trade-off between efficiency and the \emph{success rate}, which is the probability of making an improvement in one generation. It defines a \emph{target success rate}, which is $1/s$ in the case of the $(1:s+1)$ rule. Then, whenever a generation is successful it adapts the hyper-parameter to improve efficiency at the cost of a smaller success rate. For unsuccessful generations, it adapts the hyper-parameter in the other direction. Both adjustments are balanced in such a way that the success rate is pushed toward the target success rate. Some hyper-parameters for which this rule has been shown to work particularly well are the \emph{step size} in continuous optimization~\cite{kern2004learning}, the \emph{mutation rate} in discrete domains~\cite{doerr2021runtime}, and the \emph{offspring population size} for hill-climbing tasks~\cite{hevia2023success,kaufmann2023self}.

This work will focus on the offspring population size, specifically the \sea. The algorithm generates $\lambda$ offspring from the same parent in each generation, and proceeds the best offspring as parent for the next generation. It adapts the offspring population size $\lambda$ with the $(1:s+1)$ rule, see Section~\ref{sec:sear} for details. Recently, some very positive results could be shown for this algorithm. Hevia Fajardo and Sudholt studied the \sea on \onemax\footnote{\onemax is defined on the hypercube $\{0,1\}^n$ and assigns to each bit string $x$ the number of one-bits in $x$.}, a benchmark in which progress gets harder during the optimization process. They could show that for $s\le 1$, the $(1:s+1)$ rule automatically chooses and maintains the optimal $\lambda$ throughout optimization, ranging from constant $\lambda$ at the beginning to almost linear $\lambda$ as the algorithm approaches the optimum. Kaufmann, Larcher, Lengler, and Zou extended this result (for smaller $s$) to all monotonic functions, showing that the \sea shows optimal parameter control on every monotonic function. These results show that the \sea can be very successful on hill-climbing tasks without local optima.

In principle, the \oclea is also well-suited to deal with local optima. In fact, the comma strategy allows the \oclea to escape local optima by ``forgetting'' the parent, other than its elitist counterpart \olea, in which the parent always competes for entering the next generation. Indeed, this makes the \oclea more efficient than the \olea in landscapes with planted local optima~\cite{jorritsma2023comma}. However, a priori the $(1:s+1)$ rule is misaligned with this escaping option. When the algorithm is stuck in a local optimum, the $(1:s+1)$ rule starts increasing the offspring population size, which is the correct behavior for hill-climbing. However, this also increases the probability of producing a clone of the parent among the offspring, in which case the algorithm mimics the behavior of the plus strategy and loses its ability to escape local optima. When $\lambda$ is of logarithmic size or larger, the \oclea degenerates into the \olea. Thus, we can not hope that the \sea may be suited for local optima in its standard form.

To avoid this problem, Hevia Fajardo and Sudholt proposed as a simple fix to \emph{restart} the offspring population size at $\lambda=1$ whenever it exceeds some threshold $\lambda_{\max}$. In a spectacular result for the notoriously hard benchmark \cliff, which features a large plateau of local optima, they could show that the \sear optimizes cliff with $O(n\ln n)$ function evaluations, not substantially slower than \onemax~\cite{hevia2024cliff}. This is not only much better than any known performance of elitist algorithms on \cliff, but it is also drastically faster than the \oclea with any \emph{static} parameter $\lambda$, which needs time $\Omega(n^{3.98})$ even for optimally chosen static $\lambda$~\cite{hevia2024cliff}. These results gave hope that the \sear may be able to provide optimal strategies for a wide range of fitness landscapes. Unfortunately, in this paper we show that this algorithm has some severe limitations when the local optima are not clustered in form of a large cliff, but rather scattered throughout the fitness landscape. While we do believe that the \sear deserves its place in modern optimization portfolios, our result shows that it is no panacea.

\subsection{Our Result}\label{sec:our_results}
We study the \sear on the \emph{distorted OneMax} benchmark \disOM in a fixed-target setting. The function \disOM is obtained from the \onemax benchmark by increasing the fitness of each search point with some probability $p$ by some value $d>1$, thus planting local optima at random places of the landscape. For the formal definition, see Section~\ref{sec:disOM}. We mostly take the parameters of \disOM from~\cite{jorritsma2023comma}; in fact, we even allow slightly more general parameters. In particular, we choose $p= \omega(1/n\ln n)$ to make sure that the algorithms encounter distorted points during optimization, and we choose the fixed target in such a way that the target can be reached efficiently with some static values of $\lambda$, see~\cite{jorritsma2023comma} for a more thorough discussion.

In~\cite{jorritsma2023comma} it was shown that the \olea is slowed down by a factor of $1/p$, yielding runtime $\Omega(n\ln n/p)$. This can be substantial since $1/p$ may be an almost linear factor. The algorithm is slowed down because the plus strategy is not able to escape local optima and thus needs to hop from one local optimum to the next. This makes it by a factor $p$ harder to find an improvement since the algorithm does not only need to create an offspring of larger \onemax value, but in addition this offspring must be distorted. On the other hand, the \oclea with static $\lambda$ is unaffected because it can efficiently escape from local optima and has the same runtime $O(n\ln n)$ as for the corresponding \onemax problem. 

We show that the \sear suffers the same performance loss as the \olea on \disOM: it needs time $\Omega(n\ln n/p)$ to reach the fitness target. Thus, the self-adjusting mechanism costs performance and slows down the algorithm by a factor of $1/p$ compared to the known runtime $O(n\log n)$ of the \oclea with \emph{static} $\lambda$~\cite{jorritsma2023comma}.
    \begin{theorem}
        \label{thm:main}
        Consider the \sea with a resetting mechanism for the offspring population size on \disOM with $p = \omega(1/n\ln{n})$, $d=\Omega(\ln{n})$, and $\lambda_{max} \ge n^{\Omega(1)}/p$. With high probability the algorithm takes $\Omega(n \ln{n}/p)$ function evaluations to reach a target fitness of $n-k^\ast$ for $k^\ast = n^{1-\Omega(1)}$.
    \end{theorem}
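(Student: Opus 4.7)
The plan is to argue that once the algorithm's current parent lands on a distorted point, the $(1{:}s{+}1)$ rule inflates $\lambda$ into a clone-dominated regime in which comma selection degenerates into plus selection, that escaping each such trap costs $\Omega(n/(pk))$ function evaluations, and that iterating the argument across $\Theta(\ln n)$ OneMax levels yields the claimed lower bound.

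At a distorted parent $x_t$ of OneMax $n-k$ the fitness is $(n-k)+d$, and since $d = \Omega(\ln n)$ it strictly exceeds the fitness of any non-distorted offspring at OneMax at most $n-k+d-1$. In particular, the per-offspring probability of producing a \emph{strict} improvement is $q_\uparrow = \Theta(pk/n)$: such an offspring must be distorted (probability $p$) and have OneMax at least $n-k+1$, which requires a $0\to 1$ bit flip (probability $\Theta(k/n)$). While $\lambda\cdot q_\uparrow \ll 1/s$, generations are almost surely failures and the $(1{:}s{+}1)$ rule multiplies $\lambda$ by $F^{1/s}$; after $O(\ln\ln n)$ failures we have $\lambda \ge c\ln n$ for a suitable constant $c$, and a Chernoff bound gives that the number of clones among the $\lambda$ offspring is $\Omega(\ln n)$ with probability $1 - n^{-\Omega(1)}$. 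Whenever at least one clone exists and no strict improvement is produced, the best-of-$\lambda$ offspring is tied in fitness with $x_t$ (since every non-clone non-improvement has strictly smaller fitness by the previous fact), so the parent remains $x_t$. In this regime the algorithm is coupled to the elitist \oplea: it cannot leave $x_t$ except via a strict improvement.

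The expected number of function evaluations per trapped episode is then $\Theta(1/q_\uparrow) = \Theta(n/(pk))$. The hypothesis $\lambda_{\max} \ge n^{\Omega(1)}/p$ ensures $\lambda_{\max} \ge n/(pk^\ast) \ge n/(pk)$ for all $k \ge k^\ast$, so $\lambda$ reaches the equilibrium $\lambda^\ast = \Theta(n/(spk))$ without frequent resets; the reset mechanism is handled by a cycle argument, since each reset cycle costs $O(\lambda_{\max})$ evaluations while providing only $O(\lambda_{\max}\cdot q_\uparrow)$ escape probability. Crucially, a successful escape from a trap at OneMax $n-k$ moves the algorithm to a strict improvement, which is \emph{by definition} distorted at OneMax $n-k+1$; hence the algorithm is immediately re-trapped at the next level, and the per-level cost accumulates across the entire remaining ascent. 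Letting $k_0$ be the OneMax distance at which the first trap is entered, the total cost is at least
\[
\sum_{k=k^\ast}^{k_0}\Omega\!\left(\frac{n}{pk}\right) \;=\; \Omega\!\left(\frac{n}{p}\,\ln\frac{k_0}{k^\ast}\right).
\]
A first-moment argument using $p = \omega(1/(n\ln n))$ shows that with high probability $k_0 = \Omega(n)$: at any non-distorted parent, even a single distorted offspring in $\lambda \ge 1$ trials dominates all non-distorted alternatives in fitness (by $d = \Omega(\ln n)$), so at each level a trap is entered with probability $\Theta(\min(1,\lambda p))$, and summing over the $\Theta(n)$ initial levels gives $\omega(1)$ expected trap entries. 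Combined with $k^\ast = n^{1-\Omega(1)}$ this yields $\ln(k_0/k^\ast) = \Omega(\ln n)$ and hence the bound $\Omega(n\ln n/p)$.

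The hardest step is the last, where we must simultaneously establish that (i) a trap is entered while $k_0$ is still $\Omega(n)$ so that the telescoping sum collects a full $\Omega(\ln n)$ factor, and (ii) once the first trap is entered, the algorithm does not ``escape downward'' via a lucky non-clone selection during the ramp-up of $\lambda$ before the clone-dominated regime locks in. Point (ii) calls for a careful stochastic coupling with the elitist \oplea over the $O(\ln\ln n)$ failed generations of the ramp-up; point (i) calls for concentration of the number of distorted offspring produced during the early ascent, which is delicate when $p$ is near the lower bound $\omega(1/(n\ln n))$ and the expected number of such offspring is only mildly super-constant.
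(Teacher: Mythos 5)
Your core mechanism is the right one and coincides with the paper's: once the parent is distorted and $\lambda$ has grown to $\Omega(\ln n)$, every generation contains a clone, comma selection degenerates into plus selection, and each fitness improvement additionally requires the improving offspring to be distorted, which costs a factor $1/p$. The genuine gap is in how you collect the $\Omega(\ln n)$ factor. You telescope the per-level cost $\Omega(n/(pk))$ from $k_0$ down to $k^\ast$ and claim via a first-moment argument that the first trap is entered at $k_0=\Omega(n)$. This fails when $p$ is near its lower bound: while $\ZM(x_t)=\Omega(n)$ the success rate is constant, the \srule keeps $\lambda=O(1)$, and the algorithm spends only $O(n)$ evaluations in that regime, so the expected number of distorted offspring encountered there is $O(np)=\omega(1/\ln n)$, which is $o(1)$ for, say, $p=1/(n\sqrt{\ln n})$. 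Your own sum $\sum_{\text{levels}}\Theta(\min(1,\lambda p))$ is of order (evaluations spent in the region) times $p$, hence not $\omega(1)$, and with high probability no trap is entered while $k=\Omega(n)$. The paper sidesteps this entirely: it does not need $k_0=\Omega(n)$, but instead places the trap-entry interval at $\ZM(x)\in[n^{1-\eps/2},n^{1-\eps/4}]$, where a domination argument against the \oea guarantees $\Omega(n\ln n)$ evaluations and hence $\Omega(pn\ln n)=\omega(1)$ fresh distorted offspring (this is exactly where the hypothesis $p=\omega(1/(n\ln n))$ is consumed), and then harvests the full $\Omega(n\ln n/p)$ from crossing the single subsequent interval $[n^{1-\eps},n^{1-\eps/2}]$, whose \onemax-crossing time is already $\Omega(n\ln n)$. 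Your telescoping framework could be repaired by proving trap entry at some $k_0\ge k^\ast n^{\Omega(1)}$, but that still requires the ``many evaluations near the target'' ingredient you are missing.

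Two further points you flag but do not resolve are also where much of the real work lies. First, a reset drops $\lambda$ to $1$ and destroys clone domination, so the algorithm can leave $\CD$ by accepting a clean offspring \emph{without} any fitness improvement; your cost/benefit cycle argument only controls upward escapes. The paper instead shows that at $\lambda=\lambda_{\max}\ge n^{\Omega(1)}/p$ an unsuccessful generation has probability $\exp(-\Omega(n^{\eps}))$, using a fresh-randomness argument in the $3$-neighborhood to lower-bound the per-offspring improvement probability by $\Omega(pn^{-3\eps})$ despite the frozen noise, so resets simply do not occur. Second, for the downward escape during the ramp-up of $\lambda$, and for the claim that $\lambda$ never falls back below $\Theta(\ln n)$ through a streak of successes, the paper gives concrete arguments (a constant probability of a clone in every ramp-up generation, repeated $O(\ln n)$ times, and a Gambler's-Ruin bound for the random walk of $\log_F\lambda$); your proposal leaves both as unproved couplings, and without them the clone-dominated regime is never actually ``locked in''.
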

We put this into context with the known results on the \olea and the \oclea. The \oclea with an optimal \emph{static} $\lambda$ is faster than both the self-adjusting one and the \olea in the specific setting presented in \cite{jorritsma2023comma}. The following Corollary summarizes these findings.
\begin{corollary}\label{cor:results}
    Let $k^\ast = n^{\Omega(1)}\cap n^{1-\Omega(1)}$, $p =\omega(1/(n\ln n))$, and assume that there is a constant $\eps >0$ such that
    \begin{align}\label{eq:range-of-q}
        p \le (k^\ast/n)^{1+\eps}.
    \end{align}
    Finally, assume that $d = \Omega(\ln n)$ with $d\le k^{\ast}$ and $\lambda_{\max}= n^{\Omega(1)}$. Then \whp on \disOM the number of evaluations to reach fitness at least $n-k^\ast$ is
    \begin{enumerate}
        \item \label{cor:results_item_1}$\Omega(n\ln n/p)$ for the \olea with any static $\lambda \ge 1$,
        \item \label{cor:results_item_2}$\Omega(n\ln n/p)$ for the \sea,
        \item \label{cor:results_item_3}$O(n\ln n)$ for the \oclea with a suitable static $\lambda = \Theta(\ln n)$.
    \end{enumerate}
\end{corollary}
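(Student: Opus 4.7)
The plan is to derive Corollary~\ref{cor:results} as a direct combination of three existing results: Theorem~\ref{thm:main} (proved in the present paper) gives Item~\ref{cor:results_item_2}, while the two companion statements established in~\cite{jorritsma2023comma} give Items~\ref{cor:results_item_1} and~\ref{cor:results_item_3}. The proof is therefore essentially a bookkeeping exercise: I will verify that the parameter ranges specified in the corollary satisfy the hypotheses of each invoked result.

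For Item~\ref{cor:results_item_2}, I would apply Theorem~\ref{thm:main} directly. The assumptions $p=\omega(1/(n\ln n))$, $d=\Omega(\ln n)$, and $k^\ast = n^{1-\Omega(1)}$ transfer verbatim. The only delicate verification is $\lambda_{\max}\ge n^{\Omega(1)}/p$: since the corollary imposes $p\le (k^\ast/n)^{1+\eps}$ with $k^\ast \le n^{1-\Omega(1)}$, one obtains $1/p \ge n^{\Omega(1)}$, so the assumption $\lambda_{\max}=n^{\Omega(1)}$ indeed absorbs the required factor $n^{\Omega(1)}/p$, provided the implicit constants are chosen consistently.

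For Items~\ref{cor:results_item_1} and~\ref{cor:results_item_3}, I would cite the corresponding runtime analyses from~\cite{jorritsma2023comma}. Item~\ref{cor:results_item_1} asserts that on \disOM the \olea is slowed by a factor $1/p$, once local optima are encountered (ensured by $p=\omega(1/(n\ln n))$) and the target is far enough from initialization (ensured by $k^\ast=n^{\Omega(1)}$). Item~\ref{cor:results_item_3} states that the \oclea with a well-chosen static $\lambda=\Theta(\ln n)$ escapes distorted optima in constant expected generations and recovers the \onemax runtime $O(n\ln n)$; the constraint~\eqref{eq:range-of-q} together with $d\le k^\ast$ is what rules out pathologically large distortions in that analysis.

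The main difficulty therefore lies not in the corollary itself but in Theorem~\ref{thm:main}. That is where the nontrivial work is done: showing that under the $(1:s+1)$ rule the offspring population size climbs to and remains above the logarithmic threshold during most of each cycle between resets, so that the algorithm is effectively indistinguishable from the elitist \olea and inherits the factor-$1/p$ slowdown. At the level of the corollary itself I do not anticipate any hidden obstacle beyond matching the parameter conventions of the three invoked results.
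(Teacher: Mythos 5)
Your treatment of Items~\ref{cor:results_item_2} and~\ref{cor:results_item_3} matches the paper: Item~\ref{cor:results_item_2} is read off from Theorem~\ref{thm:main}, and Item~\ref{cor:results_item_3} is cited from \cite[Theorem~1.1]{jorritsma2023comma}. The gap is Item~\ref{cor:results_item_1}. You propose to obtain it by direct citation of the plus-strategy lower bound of \cite{jorritsma2023comma}, but the paper explicitly remarks that this item ``is not a direct consequence'' of that result --- the parameter regime of the corollary is slightly more general than the one analyzed there --- and therefore supplies its own short argument. That argument reuses the machinery of Section~\ref{sec:analysis}: the \olea must spend $\Omega(n\ln n)$ evaluations crossing the interval $I'$ (by domination with the \oea on \onemax); by Lemma~\ref{lemma:prob-p} these evaluations contain $\omega(1)$ distorted offspring in expectation, one of which is accepted \whp by Lemma~\ref{lemma:accept_dist_offspring}; since the elitist algorithm never decreases its fitness and \whp makes no jump of Hamming distance $\Omega(\ln n)$, it then cannot leave $\CD$, and crossing the second interval $I$ while confined to $\CD$ costs $\Omega(n\ln n/p)$ by the $(1-p)$-rejection-run argument of Lemma~\ref{lemma:reaching-target-in-distorted-points}. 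Your write-up contains none of this; as stated, ``cite \cite{jorritsma2023comma}'' does not cover all parameter combinations admitted by the corollary.

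A second, smaller issue: your verification of $\lambda_{\max}\ge n^{\Omega(1)}/p$ for Item~\ref{cor:results_item_2} runs backwards. From $1/p\ge n^{\Omega(1)}$ the quantity $n^{\Omega(1)}/p$ becomes a \emph{larger} polynomial, so a small $p$ makes the requirement on $\lambda_{\max}$ harder to meet, not easier; the hypothesis $\lambda_{\max}=n^{\Omega(1)}$ suffices only under the reading ``a sufficiently large polynomial,'' which is how the paper uses it. This does not endanger the corollary, but the inference as you wrote it is not valid.
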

The difference is a factor of $1/p$, which can be substantial, for example settings with $1/p=\Omega(n)$ are included. The conditions come directly from~\cite[Assumption~1.4]{jorritsma2023comma}, and the third statement comes from~\cite[Theorem~1.1]{jorritsma2023comma}. We note that a sufficient condition for $\lambda$ to make the \oclea efficient 
was given in~\cite{jorritsma2023comma} as $\lambda$ which satisfy $(1+\delta)\log_{e/(e-1)}(1/p) \le \lambda \le (1-\delta)\log_{e/(e-1)}(n/k^\ast)$ for an arbitrary constant $\delta >0$.\footnote{The authors of~\cite{jorritsma2023comma} comment that the second condition may not be necessary. For more details on the parameters we refer to the discussion in~\cite{jorritsma2023comma}.}

We further corroborate the theoretical findings with some empirical results, which are presented in Section~\ref{sec:experiments}. Those show quite clearly an asymptotic of $\Theta(n\ln n/p)$ for the \sea, so our lower bound in Theorem~\ref{thm:main} is apparently tight. Moreover, they confirm the asymptotic statement from Corollary~\ref{cor:results} that the \oclea with static $\lambda$ is much faster than both the \olea with static $\lambda$ and the \sea. 

We want to emphasize that we do not advertise abolishing self-adaptation and returning to static parameter choices. While we show that there are regimes in which the self-adjusting algorithm is slow, there are also other regimes where static choices have disadvantages. In particular, if the target fitness is large (e.g., $k^\ast=0$) then there is no static $\lambda$ which can reach the target fitness and at the same time avoid being stuck in local optima for long. Thus, future research should aim for alternatives which can avoid the downsides from both approaches. Moreover, for practical matters, general-purpose optimizers like EAs should always be used as part of a portfolio of optimization techniques which does not hinge on a single algorithm. 

\section{Notation and Preliminaries}\label{sec:prelim}
    \paragraph{General Notation.}
    We write $[n] \coloneqq \{1,..., n\}$. Search points are denoted by $x = (x_1, ...,x_n) \in \{0,1\}^n$, and the \onemax value is $\OM(x) \coloneqq \sum_{i \in [n]} x_i$. The \zeromax function is defined as $\ZM(x) \coloneqq n- \OM(x)$. We denote the all-one-string by $\vec{1} = (1,..., 1)$. For $x, y \in \{0,1\}^n$, the Hamming distance $H(x,y)$ of $x$ and $y$ is the number of positions $i \in [n]$ such that $x_i \neq y_i$. We denote the natural logarithm of $n$ by $\ln{n}$. With high probability (\whp) means with probability $1- o(1)$ for $n \rightarrow \infty$. For a real number $a$, we denote by $\lfloor a \rceil := \lfloor a+1/2 \rfloor$ the closest integer to $a$.

    \paragraph{Distorted OneMax.}\label{sec:disOM}
    The function is formally defined as $\disOM : \{0,1\}^n \rightarrow \mathbb{R}_{\ge 0}$. 
    We partition the search space $\{0,1\}^n$ into two sets $\CC$ and $\CD$ of ``clean'' and ``distorted'' points, respectively. For each $x \in \{0,1\}^n$ we have $x\in \CD$ with probability $p$ and $x \in \CC$ otherwise, independently of the other points. We define \disOM as
    \begin{displaymath}
        \disOM(x) \coloneqq \OM(x) + \begin{cases} d & \text{ if $x\in \CD$,} \\ 0 & \text{otherwise.}\end{cases}
    \end{displaymath}
    The function was introduced in~\cite{jorritsma2023comma}, where it was shown that plus strategies are slowed down by a factor of $1/p$, while comma strategies are not affected. Very recently, it was shown that this effect is even more drastic when the height of the distortion is drawn randomly for every distorted point, making the plus strategies super-polynomially slow~\cite{lengler2024plus}. However, we will follow the setup in~\cite{jorritsma2023comma} and use the same offset $d$ for all distorted points.

    \paragraph{Algorithms.} \label{sec:sear}
    We consider the \sear, which is identical to the one presented in \cite{hevia2024cliff}. The algorithm is called self-adjusting as the offspring population size $\lambda$ is adapted in each generation. A fitness increase results in a decrease of $\lambda$ to $\lambda / F$ for some $F>1$. If the fitness stays the same or decreases, the generation is called unsuccessful, and $\lambda$ is increased to $\lambda \cdot F^{1/s}$ for some $s>0$. Throughout the paper we will assume that $F,s$ are constant and $0<s<1$. When a sequence of $s$ successful generations and a single unsuccessful one occur, the final offspring population size is unchanged due to $\lambda \cdot (F^{1/s})^s \cdot (1/F) = \lambda$, which follows the previously mentioned $(1: s+1)$-success rule \cite{kern2004learning}. The ``reset part'' in the algorithm's name refers to the maximum offspring population size $\lambda_{\max}$ we impose. If a generation with $\lambda = \lambda_{\max}$ is unsuccessful, the offspring population size is reset to 1 instead of being increased further.    
    
    \begin{algorithm}[H]
        \DontPrintSemicolon
        \caption{Self-adjusting $(1, \lambda)$ EA resetting $\lambda$ for maximizing $f$ to target $n-k^\ast$.}

        \SetKwInput{KwInitialization}{Initialization}
        \SetKwInput{KwOptimization}{Optimization}
        \SetKwInput{KwMutation}{Mutation}
        \SetKwInput{KwSelection}{Selection}
        \SetKwInput{KwUpdate}{Update}

        \KwInitialization{$t= 0$; choose $x_0 \in \{0,1\}^n$ uniformly at random and $\lambda_0 = 1$;}
        \KwOptimization{
            \While{$f(x_t) < n-k^\ast$}{
                \KwMutation{
                    \For{$i \in \{1,...,\lfloor \lambda_t \rceil\}$}{
                        $y_t^{i} \in \{0,1\}^n \leftarrow \text{standard bit mutation($x_t$)}$; \;
                    }
                }
                \KwSelection{
                    Let $y_t = \arg\max\{f(y_t^{1}),..., f(y_t^{\lfloor \lambda_t \rceil})\}$, breaking ties uniformly at random; \;
                }
                \KwUpdate{
                    $x_{t+1} \leftarrow y_t$; \;
                    \lIf{$f(x_{t+1}) > f(x_t)$}{
                        $\lambda_{t+1} \leftarrow \max\{\lambda_t / F, 1\}$;
                    }
                    \lIf{$f(x_{t+1}) \leq f(x_t) \wedge \lambda_t = \lambda_{\max}$}{
                    $\lambda_{t+1} \leftarrow 1$;
                    }
                    \lIf{$f(x_{t+1}) \leq f(x_t) \wedge \lambda_t \neq \lambda_{\max}$}{
                    $\lambda_{t+1} \leftarrow \min\{\lambda F^{1/s}, \lambda_{\max}\}$;
                    }
                    $t\leftarrow t+1$;
                }
            }
        }
    \end{algorithm}

    New offspring of a search point $x$ are created by applying a \emph{standard bit mutation}: Each bit in $x$ is being flipped independently with probability $1/n$. We consider a fixed target setting of $n-k^\ast$ following \cite{jorritsma2023comma}.
    
\section{Properties of the \sea}\label{sec:estimates}
    In this section, we provide a series of useful probability estimates, most of which are not specific to our benchmark algorithm combination and may prove useful in other settings as well. 

    We call an offspring a \emph{clone} of the parent if it is an exact copy. The first lemma provides bounds on (not) having a clone among the offspring.
    \begin{lemma}
    \label{lemma:prob-no-clone}
        The probability of not having a clone of the parent among $\lambda_t$ offspring is at least $\left(1- 1/e \right)^{\lambda_t}$. The probability of having at least one clone among the offspring is at least $\exp{(-en/(\lambda_t (n-1)))}$.
    \end{lemma}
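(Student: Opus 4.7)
The plan is to condition on a single offspring and then combine via independence. Under standard bit mutation, each of the $n$ bits of the parent flips independently with probability $1/n$, so a single offspring is a clone precisely when no bit flips, which has probability $p_c := (1-1/n)^n$. Since the $\lambda_t$ offspring are generated independently, the probability that none of them is a clone equals $(1-p_c)^{\lambda_t}$. The first claim then follows from the classical estimate $p_c \le 1/e$ (i.e.\ $(1-1/n)^n \le e^{-1}$), which gives $(1-p_c)^{\lambda_t} \ge (1-1/e)^{\lambda_t}$.

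For the second claim, I would work with the matching lower bound $p_c \ge (n-1)/(en)$, obtained from $(1-1/n)^{n-1} \ge 1/e$ (equivalently $(1+1/(n-1))^{n-1}\le e$) by writing $p_c = (1-1/n) \cdot (1-1/n)^{n-1}$. Applying $1-x \le e^{-x}$ to $x = p_c$ then yields
$$ 1 - (1-p_c)^{\lambda_t} \;\ge\; 1 - e^{-\lambda_t p_c}. $$

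The only genuinely non-routine step is relating $1 - e^{-y}$ to $e^{-1/y}$. I would insert the intermediate quantity $y/(1+y)$ and use the two-sided chain
$$ 1 - e^{-y} \;\ge\; \frac{y}{1+y} \;\ge\; e^{-1/y} \qquad (y>0), $$
valid because the left inequality is equivalent to $e^y \ge 1+y$ and the right to $\ln(1+1/y) \le 1/y$. Applied with $y = \lambda_t p_c$, this gives $1 - e^{-\lambda_t p_c} \ge \exp(-1/(\lambda_t p_c))$. Finally, substituting the bound $p_c \ge (n-1)/(en)$ into the exponent yields $1/(\lambda_t p_c) \le en/(\lambda_t (n-1))$, which delivers the stated lower bound $\exp(-en/(\lambda_t(n-1)))$ on the probability of at least one clone.

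The calculations are entirely elementary. The only obstacle worth flagging is spotting the auxiliary quantity $y/(1+y)$: without it the target bound $\exp(-en/(\lambda_t(n-1)))$ looks mysterious given the natural intermediate expression $1-e^{-\lambda_t p_c}$, but once that interpolation is in place the proof is just three applications of $e^x \ge 1+x$.
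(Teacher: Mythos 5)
Your proposal is correct and follows essentially the same route as the paper: both bound the single-offspring clone probability by $(n-1)/(en) \le (1-1/n)^n \le 1/e$ and then pass through the identical chain $1-(1-p_c)^{\lambda_t} \ge 1-e^{-y} \ge y/(1+y) = 1/(1+1/y) \ge e^{-1/y}$, differing only in whether the bound on $p_c$ is substituted before or after this chain. No gaps.
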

    \begin{proof}
        The probability that a single offspring is a clone is $(1- 1/n)^n$. The probability that none of the offspring is a clone is therefore at least 
        \begin{displaymath}
            \left(1- \left(1-\frac{1}{n}\right)^{n}\right)^{\lambda_t} 
            \ge \left(1-\frac{1}{e}\right)^{\lambda_t}.
        \end{displaymath}
        The probability of at least one clone is at least 
        \begin{align*}
            &1- \left(1- \left(1-\frac{1}{n}\right)^{n}\right)^{\lambda_t} 
            \ge 1- \left(1-\frac{n-1}{en}\right)^{\lambda_t}
            \ge 1- e^{-\lambda_t (n-1)/en} \\
            &\ge 1 - \frac{1}{1+\lambda_t (n-1)/en}
            = \frac{1}{1+ en/(\lambda_t (n-1))}
            \ge \exp{\left(-\frac{en}{\lambda_t (n-1)}\right)}. \tag*{\qed} 
        \end{align*}
    \end{proof}
    The next result is more specific to \disOM. It will allow us to argue that if each generation has a distorted point among its offspring, the algorithm will not leave the set of distorted points $\CD$.
    \begin{lemma}
        \label{lemma:accept_dist_offspring}
        Let the size of the distortion be $d = \Omega(\ln{n})$. For any constant $c>0$, if any of the $\lambda = n^c$ offspring is distorted, with probability $1-n^{-\omega(1)}$ the accepted offspring will also be distorted.
    \end{lemma}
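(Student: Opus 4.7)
The plan is to control the $\OM$-spread of the $\lambda=n^c$ offspring and then exploit the fact that the distortion bonus $d$ dwarfs this spread. Concretely, the strategy is to show that, with probability $1-n^{-\omega(1)}$, no offspring differs from the parent $x$ in more than $k_0 := \lceil d/3 \rceil$ bits. Since $d=\Omega(\ln n)$ implies $k_0 = \Omega(\ln n)$, this concentration will be strong enough to absorb the $n^c$ union bound, and at the same time it guarantees $2k_0 < d$, which is exactly the slack needed for a distorted offspring to dominate.

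For the concentration, the number of bits flipped by one standard bit mutation is $\mathrm{Bin}(n,1/n)$, and the crude bound $\binom{n}{j}(1/n)^j \le 1/j!$ gives, for a single offspring,
\[
\Pr[\text{more than $k_0$ bits flipped}] \le \sum_{j>k_0} 1/j! = O(1/k_0!).
\]
Using $k_0! \ge (k_0/e)^{k_0}$ together with $k_0 = \Omega(\ln n)$ yields $k_0! \ge n^{\Omega(\ln\ln n)}$, which is super-polynomial in $n$, so the per-offspring failure probability is $n^{-\omega(1)}$. A union bound over the $\lambda=n^c$ offspring then gives $n^c \cdot n^{-\omega(1)} = n^{-\omega(1)}$ for the event that some offspring flips more than $k_0$ bits.

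On the complementary (high-probability) event, every offspring $y$ satisfies $|\OM(y) - \OM(x)| \le k_0$, so every clean offspring has fitness at most $\OM(x) + k_0$, whereas a distorted offspring $y^\star$ (one of which exists by hypothesis) has fitness at least $\OM(y^\star) + d \ge \OM(x) - k_0 + d > \OM(x) + k_0$ by the choice $2k_0 < d$. Hence every distorted offspring strictly beats every clean offspring in fitness, so the $\arg\max$, and thus the accepted offspring, is necessarily distorted; this holds deterministically on the good event, and no tie-breaking between clean and distorted offspring can occur. The only subtlety in the argument is the joint calibration of $k_0$: it must be small enough that $d > 2k_0$ even for the weakest constant hidden in $d = \Omega(\ln n)$, yet large enough that $1/k_0!$ is super-polynomially small so as to neutralize the $n^c$ union bound. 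Both requirements are simultaneously met by any $k_0 = \Theta(\ln n)$ with an appropriately chosen constant, so there is no real obstacle beyond this balancing.
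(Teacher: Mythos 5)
Your argument is correct and follows essentially the same route as the paper's proof: both reduce the claim to the fact that, for a clean offspring to beat a distorted one, some offspring would have to lie at Hamming distance $\Omega(\ln n)$ from the parent, which happens with probability $n^{-\Omega(\ln\ln n)}$ per offspring and hence survives the union bound over $n^c$ offspring. The only difference is cosmetic: the paper cites an existing concentration lemma for this tail bound, whereas you derive it directly from the $\mathrm{Bin}(n,1/n)$ tail via $\sum_{j>k_0}1/j! = O(1/k_0!)$, which makes your version self-contained.
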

    \begin{proof}
        For a clean offspring to be accepted, its fitness must be larger than that of the distorted point. Since $d \in \Omega(\ln{n})$, either the distorted or the clean point must have Hamming distance $\Omega(\ln{n})$ to the parent. By \cite[Lemma~3.3]{jorritsma2023comma} the probability of a single offspring satisfying this is $n^{- \Omega(\ln{\ln{n}})}$. With a union bound over the $n^{c}$ offspring, the lemma follows.
    \qed    
    \end{proof}
    A major complication in runtime analyses of algorithms on \disOM is the fact that the noise of the benchmark is frozen \cite{lengler2024plus, jorritsma2023comma}. This means that when we sample an offspring, we can not simply assume that it is distorted with probability $p$ since it may have been sampled earlier. The probability of having been sampled may depend on the history of the run. This may seem like a technicality, but in fact it may have a major impact on the resulting runtimes, see \cite{jorritsma2023comma} for a discussion. Following a technique invented in \cite{lengler2024plus}, we prove the following lemma to show that enough neighbours of the current search point have not been queried yet and thus provide ``fresh randomness''.
    \begin{lemma}
        \label{lemma:prob-p}
        Consider any algorithm that creates offspring from previously visited search points with standard bit mutation of mutation rate $1/n$ on any fitness function. Assume the algorithm has created $o(n^3)$ offspring so far. Let $x\in\{0,1\}^n$ be any search point. Then the probability that a random offspring of $x$ has not yet been queried is $\Omega(1)$. In particular, for \disOM the probability that this offspring is distorted is $\Omega(p)$.  
    \end{lemma}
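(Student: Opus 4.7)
The plan is to isolate offspring at a convenient constant Hamming distance from the parent $x$, where the local neighbourhood is so large that only a vanishing fraction of it could have been queried by the algorithm so far. Concretely I would pick $k=3$. By standard bit mutation, the probability that an offspring lies at Hamming distance exactly $k$ from $x$ is $\binom{n}{k}(1/n)^k(1-1/n)^{n-k}=(1+o(1))/(k!\,e)=\Omega(1)$, and conditional on this event the offspring is distributed uniformly over the $\binom{n}{k}=\Theta(n^k)$ search points at distance $k$ from $x$.

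Next I would plug in the hypothesis that at most $o(n^3)$ offspring have ever been created, so that at most $o(n^3)$ distinct search points have been queried. For $k=3$ this is only an $o(1)$-fraction of the $\Theta(n^3)$ distance-$3$ neighbours of $x$. Combining this with the uniformity from the first step, the conditional probability (given the history) that a random offspring of $x$ is at distance $3$ \emph{and} has not yet been queried is at least $(1-o(1))/(3!\,e)=\Omega(1)$, which proves the first claim, and notably holds for any fitness function.

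For the \disOM\ part I would invoke the frozen-noise viewpoint: for any point $z$ that has never been queried during the run, the history contains no information about whether $z\in\CD$, so $\Prob[z\in\CD\mid\text{history}]=p$, independently of the mutation that produces the offspring $y$. Hence, conditional on the history and on $y$ being fresh, $y$ is distorted with probability exactly $p$; multiplying with the $\Omega(1)$ bound from before shows that $y$ is fresh and distorted with probability $\Omega(p)$, and a fortiori distorted with probability at least $\Omega(p)$. The only subtlety lies in this last independence step: one must cleanly separate the randomness of the mutation from the frozen noise $\CD$ and argue that, although the trajectory of the algorithm depends strongly on $\CD$, the distortion status of an \emph{unseen} point is still an unrevealed $\mathrm{Bernoulli}(p)$ coin flip. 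This is exactly the ``fresh randomness'' idea inherited from \cite{lengler2024plus}; beyond it the proof is a routine counting argument.
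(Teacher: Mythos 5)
Your proposal is correct and follows essentially the same route as the paper's proof: both fix Hamming distance exactly $3$, lower-bound the probability of that event by $\Omega(1)$, compare the $o(n^3)$ queried points against the $\Theta(n^3)$ points in the distance-$3$ neighbourhood, and then use that an unqueried point is distorted with probability $p$. Your added remark about cleanly separating the mutation randomness from the frozen noise is a point the paper treats as immediate, but it does not change the argument.
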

    \begin{proof}
        The probability of an offspring having a Hamming distance of exactly 3 to the parent is
        \begin{displaymath}
            {\binom{n}{3}} \left(\frac{1}{n}\right)^3 \left(1- \frac{1}{n}\right)^{n-3} 
            \ge \left(\frac{n}{3}\right)^3 \left(\frac{1}{n}\right)^3 \left(1- \frac{1}{n}\right)^{n-1}
            \ge \frac{1}{27e}
            = \Omega(1).
        \end{displaymath}   
        For a parent, there are ${\binom{n}{3}}$ points in the three-neighborhood. Since we assumed $o(n^3)$ points have been queried so far, each new offspring has not been sampled before with probability $\Omega(1)$, conditional on having Hamming distance three to the parent. Together, this implies that with probability $\Omega(1)$ each new offspring has not been sampled before. The second claim follows immediately because a search point that has not been sampled before is distorted with probability $p$.
        \qed
    \end{proof}
    The final lemma enables us to contend that a significant drop in $\lambda$ is unlikely in certain settings. It is closely related to the Gambler's Ruin Problem \cite{feller1991introduction}.
    \begin{lemma}
        \label{lemma:interval-crossing}
        Consider the \sea on \disOM in a state $(x_0, \lambda_0)$ with $\lambda_0 \in [\alpha F^{\beta-1}, \alpha F^{\beta})$, for some $\alpha \ge 1$ and $\beta \in \N$. Suppose there is $q>0$ and a set $X\subseteq \{0,1\}^n \times [1,\lambda_{\max}]$ of states such that for any state $(x_t, \lambda_t) \in X$, the probability that the next generation is successful is at most $q$. 
        Then the probability that $\lambda$ falls below $\alpha$ before either the algorithm leaves $X$ or it increases $\lambda$ to at least $\alpha F^{\beta}$ is at most $(1/q-2)/(\left(1/q-1\right)^{\beta+1}-1)$.
    \end{lemma}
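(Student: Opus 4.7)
The plan is to reduce the multiplicative $\lambda$-dynamics to an additive random walk on $\mathbb Z$ and invoke classical Gambler's Ruin. Define $L_t \coloneqq \lfloor \log_F(\lambda_t/\alpha) \rfloor$, so that $L_0 = \beta - 1$ by the hypothesis on $\lambda_0$. The two relevant events translate into $\{\lambda_t < \alpha\} = \{L_t \le -1\}$ and $\{\lambda_t \ge \alpha F^\beta\} = \{L_t \ge \beta\}$. Writing $\lambda_t = \alpha F^{L_t + \phi_t}$ with $\phi_t \in [0,1)$, a successful generation gives $L_{t+1} = L_t - 1$, while an unsuccessful one gives $L_{t+1} = L_t + \lfloor \phi_t + 1/s \rfloor \ge L_t + 1$, where the inequality uses $0 < s < 1$ so that $F^{1/s} > F$.

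Next I couple $(L_t)$ with a simple $\pm 1$ biased walk $(L'_t)$ on $\mathbb Z$ starting at $L'_0 = \beta - 1$, whose step is $-1$ with probability exactly $q$ and $+1$ with probability $1-q$. Because while the algorithm remains in $X$ the true down-probability of $L$ is at most $q$, I can drive both processes from one common uniform random variable so that $L_t \ge L'_t$ deterministically at every step up to the first exit from $X$: every down-step of $L$ is matched by a down-step of $L'$ (feasible since $L'$ has the larger down-probability), and every up-step of $L$ is of size at least $+1$ while $L'$ moves by exactly $+1$. Let $A$ be the event that $L$ hits $-1$ before either hitting $\beta$ or leaving $X$, and $A'$ the event that $L'$ hits $-1$ before hitting $\beta$. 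Then $A \subseteq A'$: at the first time $L$ touches $-1$ the dominated walk satisfies $L' \le L = -1$, and at every prior time $L' \le L < \beta$, so $L'$ hits $-1$ strictly before $\beta$. Hence $\Prob[A] \le \Prob[A']$.

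It remains to compute $\Prob[A']$ from the standard Gambler's Ruin recursion $u_i = (1-q) u_{i+1} + q u_{i-1}$ on $\{-1, 0, \ldots, \beta\}$ with $u_{-1} = 1$ and $u_\beta = 0$. Setting $r \coloneqq q/(1-q)$, the homogeneous solution yields $\Prob[A'] = r^\beta(1-r)/(1 - r^{\beta+1})$. Using $1/r = 1/q - 1$ and $(1-r)/r = 1/q - 2$, multiplying numerator and denominator by $1/r^{\beta+1}$ rewrites this as $(1/q - 2)/((1/q - 1)^{\beta+1} - 1)$, which is the claimed bound.

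The main obstacle is the coupling step. One has to verify carefully that the upper bound $q$ on the success probability (rather than an equality) and the overshoots on up-steps (which can exceed $+1$) both act in our favor, and that the stopping rule involving the exit from $X$ is handled consistently — for instance, the coupling only needs to be valid up to the first exit time, and the event $A$ refers to what happens strictly before that time. The edge cases around $\max\{\cdot,1\}$ and the reset at $\lambda_{\max}$ also need a brief check, but they are innocuous provided $\alpha F^\beta \le \lambda_{\max}$. Once stochastic domination is in place, the Gambler's Ruin computation is routine algebra.
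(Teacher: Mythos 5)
Your proof is correct and arrives at the paper's bound by the same underlying reduction: both arguments collapse the multiplicative $\lambda$-dynamics into a biased nearest-neighbour walk on the $\beta+2$ levels $[\alpha F^{i-1},\alpha F^{i})$, treat up-overshoots (from $F^{1/s}>F$) and the upper bound $q$ on the success probability pessimistically, and solve the standard Gambler's Ruin recursion with absorbing barriers, yielding the identical expression $(1/q-2)/((1/q-1)^{\beta+1}-1)$. The one substantive difference is how the domination is justified. The paper writes the recursion $P_i \le qP_{i-1}+(1-q)P_k$ directly on the hitting probabilities and then replaces it by the equality recursion; this step implicitly needs the $P_i$ to be monotone in $i$, a subtlety the paper only acknowledges in a footnote and resolves by passing to worst-case probabilities. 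Your explicit monotone coupling of $L_t=\lfloor\log_F(\lambda_t/\alpha)\rfloor$ with a $\pm1$ walk $L_t'$ of down-probability exactly $q$ gives $L_t\ge L_t'$ deterministically up to the exit time from $X$, which sidesteps that monotonicity issue entirely and is, in that respect, the cleaner argument. You are also right to flag the edge cases: the clamping $\max\{\lambda_t/F,1\}$ only helps the domination, and the reset at $\lambda_{\max}$ is excluded exactly when $\alpha F^{\beta}\le\lambda_{\max}$ (an assumption the paper's proof also makes implicitly and discharges separately in the applications of the lemma).
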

    \begin{proof}
        We approach the problem from a random walk perspective and aim to build a connection to the Gambler's Ruin problem \cite{feller1991introduction}. Let us define the states $S_0, S_1, ..., S_{\beta+1}$. The algorithm is in state $S_i$ if the current offspring population size $\lambda_t$ is in the interval $[\alpha F^{i-1}, \alpha F^i)$, and $x_t$ is arbitrary in $X$. Notice that when in $S_i$, $i$ consecutive successful generations reduce the offspring population size from $\lambda_t$ to $\lambda_t F^{-i} < \alpha F^{i} F^{-i} = \alpha$. Let $P_i$ denote the probability of the algorithm reaching the state $S_{0}$ before leaving $X$ or reaching $S_{\beta+1}$. Clearly $P_0 = 1$ and $P_{\beta+1} = 0$. We proceed to compute $P_i$ for $1 \leq i \leq \beta$. We will pessimistically assume that the algorithm does not leave the set $X$. 

        If the current state is $S_i$ and we encounter a successful generation, the algorithm moves to $S_{i-1}$. Conversely, if the generation is not successful $\lambda$ is multiplied by $F^{1/s}$ resulting in a move to a state $S_{k}$ with $k > i$. By the bound $q$ on the probability of successful generation, we can bound $P_i$ recursively\footnote{It could happen that the $P_i$ are not increasing due to the set $X$, in which case the bound may not hold. However, the recursion is correct if we replace $P_i$ with the minimal probability over all possible search spaces that satisfy the condition of the lemma because those probabilities are increasing. We suppress this complication from the proof.} as
        \begin{displaymath}
            P_i \leq q P_{i-1} + (1-q) P_{k}.
        \end{displaymath}
        Pessimistically assuming  $k = i+1$ and equality, we obtain the classical recursion for some upper bound $\tilde P_i \ge P_i$,
        \begin{displaymath}
            \tilde P_i = q \tilde P_{i-1} + (1-q) \tilde P_{i+1}.
        \end{displaymath}
        The above equation is the recursion for the Gambler's Ruin Problem, which has the following solution~\cite{feller1991introduction} for $0<i\leq \beta$: 
        \begin{displaymath}
            \tilde P_{i} = \frac{1 - \left((1-q)/q\right)^{\beta + 1-i}}{1 - \left((1-q)/q\right)^{\beta+1}}.
        \end{displaymath}
        The lemma assumes the offspring population size is in the interval $[\alpha F^{\beta-1}, \alpha F^\beta)$. Thus, recalling $P_\beta$ represents the measure we are looking for:
        \begin{displaymath}
            P_{\beta} \le \tilde P_{\beta} = \frac{1 - \left((1-q)/q\right)}{1 - \left((1-q)/q\right)^{\beta+1}}
            = \frac{1/q-2}{\left(1/q-1\right)^{\beta+1}-1} \tag*{\qed} 
        \end{displaymath}
    \end{proof}
    
\section{Lower Runtime Bounds}\label{sec:analysis}
    In this section, we prove Theorem~\ref{thm:main}. The core idea is to show that the algorithm must traverse a \onemax-interval $I$ of size $n^{\eps}$ and that it requires $\Omega(n \ln{n}/p)$ evaluations to do so. The $\Omega(1/p)$ factor stems from the observation that the algorithm will stay among the distorted points while crossing the interval.

    To ensure that the algorithm stays among search points in $\CD$ throughout the interval $I$, we need to show that the algorithm enters it in a distorted point. In light of this, we look at a preceding interval $I'$ of Hamming distances to $\Vec{1}$ and show that the algorithm reaches a state in $I'$ such that the search point is distorted and $\lambda$ is at least logarithmic in size.
    \begin{lemma}
        \label{lemma:big-lambda-distorted}
        Consider the \sear on \disOM as in Theorem~\ref{thm:main}. Let $\eps > 0$ be a small constant such that $\lambda_{\max} \ge n^{\eps}$ and $k^\ast + d \leq n^{1- \eps}$, and let $I' \coloneqq [n^{1- \eps/2}, n^{1- \eps/4}]$ be an interval of Hamming distances to $\Vec{1}$. 
        Then with high probability the algorithm will either make $\Omega(n \ln{n}/p)$ evaluations before reaching a search point of distance smaller than $n^{1-\eps/2}$ from $\Vec 1$, or it reaches a state $(x_t, \lambda_t)$ where $x_t$ is in $I'$ and distorted, and $\lambda_t \ge 6e F^{16/\eps}\ln{n}$.
    \end{lemma}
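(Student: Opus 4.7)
The plan is to break the proof into two parts: first, argue that the algorithm enters $I'$ already with a large $\lambda$, and second, argue that within $I'$ one of the two dichotomy alternatives must hold. For the entry step, note that the initial Hamming distance to $\vec{1}$ is $(1/2+o(1))n$ w.h.p., well above the top of $I'$. A Chernoff bound on standard bit mutations shows that no single generation can change the Hamming distance by more than $O(\ln n)$ bits w.h.p., so the algorithm cannot skip over $I'$, whose width is $\Omega(n^{1-\eps/4})$. In the easy region above $I'$, the self-adjusting rule pushes $\lambda$ towards $\Theta(n/k)$ by the analyses of the \sea on \onemax in~\cite{hevia2023success}; any distorted parent encountered earlier only forces $\lambda$ further upward. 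Hence at the first time $\tau_1$ the algorithm lands in $I'$, we have $\lambda_{\tau_1} = n^{\Omega(\eps)} \gg 6e F^{16/\eps}\ln n$.

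Next I would define a ``stuck'' set $X$ of states $(x,\lambda)$ with $x$ clean and in $I'$, and $\lambda$ in the interval $[\alpha, \alpha F^{16/\eps})$, where $\alpha$ is taken slightly larger than $6e F^{16/\eps}\ln n$ (by one factor $F$) to absorb one step of the $(1{:}s+1)$-rule after a success. On $X$, only a clean improving offspring can cause success, and the standard OneMax-style bound $1-(1-k/(en))^\lambda$ together with $k/n \le n^{-\eps/2}$ and $\lambda \le \alpha F^{16/\eps}$ caps the success probability by a constant $q < 1/(s+1)$; this makes $\lambda$ drift upward under the $(1{:}s+1)$-rule while the state remains in $X$. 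Applying Lemma~\ref{lemma:interval-crossing} with this $\alpha$ and $\beta = 16/\eps$ shows that, once inside $X$, $\lambda$ falls below $\alpha$ only with probability at most $(1/q-1)^{-(16/\eps+1)}$, which can be made an arbitrarily small constant by choosing $\eps$ small enough.

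Within phases where the state is in $X$, each of the $\lambda \ge \alpha$ offspring per generation is distorted with probability $\Omega(p)$ by Lemma~\ref{lemma:prob-p}, so the chance of at least one distorted offspring per generation is $\Omega(\lambda p)$. By Lemma~\ref{lemma:accept_dist_offspring} (applicable since $\lambda \le \lambda_{\max} = n^{O(1)}$), any such generation produces a distorted $x_{t+1}$ w.h.p., while $\lambda_{t+1} \ge \alpha/F \ge 6e F^{16/\eps}\ln n$ by our choice of $\alpha$. This yields the good state unless the algorithm manages to avoid generating a distorted offspring in every single generation spent in $X$. Over $T$ generations in $X$, the probability of that avoidance is $(1-\Omega(\lambda p))^T$, which is $o(1)$ as soon as $T\lambda p = \omega(\log n)$. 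Converting $T\lambda$ into total evaluations and using $p = \omega(1/(n\ln n))$ forces $\Omega(n\ln n/p)$ evaluations in the alternative case, matching the first branch of the lemma.

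The main obstacle is controlling the $\lambda$-dynamics of the self-adjusting rule while the parent moves between clean points in $I'$. Lemma~\ref{lemma:interval-crossing} provides the needed random-walk bound, and the factor $F^{16/\eps}$ in the target $\lambda$ is precisely tailored so that the Gambler's-Ruin-style failure probability of Lemma~\ref{lemma:interval-crossing} is small enough to yield a high-probability statement; a secondary subtlety is ensuring that a successful transition to a distorted parent does not itself pull $\lambda$ back below the target threshold, which is handled by the extra $F$ factor in $\alpha$.
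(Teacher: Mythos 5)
Your overall skeleton matches the paper's (drive $\lambda$ up inside $I'$, keep it up via Lemma~\ref{lemma:interval-crossing}, then combine Lemma~\ref{lemma:prob-p} with Lemma~\ref{lemma:accept_dist_offspring} to land on a distorted parent), but three load-bearing steps are missing or broken. First, the entry claim ``$\lambda_{\tau_1}=n^{\Omega(\eps)}$ because the \onemax analysis pushes $\lambda$ to $\Theta(n/k)$, and distorted parents only push $\lambda$ upward'' is not justified on \disOM: with $p=\omega(1/(n\ln n))$ the algorithm meets distorted points well before $I'$, a distorted parent needs improving \emph{and} distorted offspring to have a success, and an unbroken run of failures triggers a reset to $\lambda=1$ --- so ``only upward'' is false, and the cited \onemax equilibrium does not transfer. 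The paper avoids needing any control over $\lambda$ at entry: it shows that \emph{inside} $I'$, $O(\ln^2 n)$ evaluations of (w.h.p.\ unsuccessful) generations suffice to push $\lambda$ up to $\gamma$, handling separately the case where the parent jumps between clean and distorted points. Second, your Gambler's-ruin step caps the success probability only by a constant $q<1/(s+1)$; the resulting bound $(1/q-1)^{-(16/\eps+1)}$ is then a constant (indeed for $q>1/2$, i.e.\ $s<1$, the formula degenerates to roughly $1-s$), which cannot yield a w.h.p.\ statement, let alone survive a union bound over the polynomially many times the walk may re-enter your set $X$. Your own intermediate bound $\lambda k/(en)=O(\ln n\cdot n^{-\eps/4})$ is polynomially small and is exactly what the paper uses ($q=\gamma n^{-\eps/4}$, per-phase failure $<n^{-3}$); you should not have discarded it. (Also note $k/n\le n^{-\eps/4}$, not $n^{-\eps/2}$, at the top of $I'$, and you never rule out resets at $\lambda_{\max}$ inside $I'$, which the paper does using $\lambda_{\max}\ge n^{\eps}$.)

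Third, your concluding step inverts the logic. To show that some generation in $X$ produces a distorted offspring w.h.p., you need a \emph{lower} bound on the number of offspring evaluated while crossing $I'$; the statement ``avoidance forces $\Omega(n\ln n/p)$ evaluations'' does not follow from $(1-\Omega(\lambda p))^{T}$ being small --- if the algorithm crossed $I'$ in few evaluations, the avoidance probability would simply not be small and neither branch of the lemma would be established. The paper closes this hole with its item (iii): by \cite[Lemma~3.3]{jorritsma2023comma} the algorithm enters $I'$ near its top, and by the domination theorem \cite[Theorem~3.5]{jorritsma2023comma} together with the \oea lower bound \cite[Theorem~3.6]{jorritsma2023comma} it needs $\Omega(n\ln n)$ evaluations to cross $I'$; only then does $\Omega(n\ln n)\cdot\Omega(p)=\omega(1)$ give the expected number of distorted offspring. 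You need an analogue of this crossing-time lower bound for your argument to close.
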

    \begin{proof}
        The lemma holds trivially if the algorithm takes $\Omega(n \ln{n}/p)$ evaluations to cross $I'$. Going forward, we thus assume the algorithm makes $o(n \ln{n}/p)$ evaluations. We show that \whp,
        \begin{enumerate}[(i)]
            \item $\lambda$ is increased to at least $\gamma \coloneqq 6e F^{32/\eps} \ln{n}$ in $O(\ln^2{n})$ evaluations if the algorithm stays in $I'$,
            \item if $\lambda \ge \gamma$ it will not drop below $6eF^{16/\eps}\ln{n}$ before entering a distorted point, and
            \item the algorithm makes $\Omega(n \ln{n})$ evaluations in consecutive generations in $I'$.
        \end{enumerate}
        From these three items, the lemma follows. By (i), $\lambda$ is increased to $\gamma$ in $O(\ln^2{n})$ evaluations. Once the algorithm has reached this offspring population size, \whp it will not drop below $6eF^{16/\eps}\ln{n}$ according to (ii). Together with (iii), this implies $\Omega(n \ln{n}) - O(\ln^2{n}) = \Omega(n \ln{n})$ evaluations are from states in which the offspring population size is at least $6eF^{16/\eps}\ln{n}$. By Lemma~\ref{lemma:prob-p}, each of these offspring have probability $\Omega(p) = \omega(1/(n \ln{n}))$ to be distorted. Hence, the expected number of distorted points among these offspring is $\Omega(p \cdot n\ln{n}) = \omega(1)$. By Lemma~\ref{lemma:accept_dist_offspring}, such an offspring is accepted \whp.
        
        It remains to prove the three items, starting with (i). Let $\lambda_t$ be the current offspring population size. Using a similar argumentation to the proof of \cite[Lemma~4.6]{hevia2024cliff}, $\lceil s\log_{F}{(\gamma/ \lambda_t)} \rceil$ consecutive unsuccessful generations are sufficient to ensure an increase in the offspring populations size to at least $\gamma$. Using $\lfloor x \rceil \leq 2x$ for $x \ge 1$, the number of evaluations this requires is at most 
        \begin{align*}
            \sum_{i=0}^{\left\lceil s\log_{F}{(\gamma/ \lambda_t)} \right \rceil}{\left\lfloor \lambda_t F^{i/s} \right \rceil}
            &\leq 2 \lambda_t \sum_{i=0}^{s\log_{F}{(\gamma/ \lambda_t)} +1}{F^{i/s}}\\
            &\leq 2 \lambda_t \frac{\left(F^{1/s}\right)^{s\log_{F}{(\gamma/ \lambda_t)}+2}-1}{F^{1/s}-1}
            \leq \frac{2F^{2/s}}{F^{1/s}-1}\gamma 
            = O(\ln{n}).
        \end{align*}
        
        To show that we only encounter unsuccessful generations until $\lambda$ is increased to the desired value, we show that \whp none of the $O(\ln{n})$ offspring increase the fitness.
        
        We first consider the case that no distorted offspring is created from a clean parent. Let $\pimp$ be the probability of a single offspring increasing its fitness. $\pimp \leq n^{-\eps/4}$, since at least one of the at most $n^{1-\eps/4}$ zero-bits must be flipped. With a union bound over the $O(\ln{n})$ offspring, the probability of this is $O(\ln{n}) \cdot \pimp = o(1)$.
        
        We turn to the case in which the algorithm creates a distorted offspring from a clean parent before $\lambda$ is increased to at least $\gamma$. Assume the algorithm has just jumped from a clean to a distorted point. Now, it either has to make another such jump within the next $O(\ln{n})$ evaluations, or \whp the algorithm increases $\lambda$ to at least $\gamma$, using the same argumentation as before. For another jump, the algorithm must first leave the set of distorted points $\CD$ again. By showing that with probability $\Omega(1)$ the algorithm will not leave $\CD$ once entered, \whp after $O(\ln{n})$ jumps from clean to distorted points, the algorithm increases the offspring population size to at least $\gamma$. This process takes $O(\ln^2{n})$ evaluations.
        
        It remains to show that the algorithm will stay in the distorted points with probability $\Omega(1)$. A sufficient condition is that each generation has a clone among its offspring. With Lemma~\ref{lemma:prob-no-clone}, the probability of this is at least 
        \begin{align*}
            \prod_{i=0}^{\lceil s\log_{F}{(\gamma/ \lambda_t)} \rceil} \exp{\left(-\frac{en}{\lambda_t F^{i/s} (n-1)}\right)}
            &\ge \exp{\left(-\frac{en}{(n-1)} \sum_{i=0}^{\infty} F^{-i/s}\right)}\\
            &\ge \exp{\left(-\frac{en}{(n-1)} \frac{1}{1-F^{-1/s}}\right)} = \Omega(1).
        \end{align*}
        
        In order to show (ii), note that the offspring population size can drop below $\ln{n}$ in two different ways. Either the algorithm encounters a reset by increasing $\lambda$ beyond $\lambda_{max}$, or on the ``natural way'' by a series of successful generations decreasing the offspring population size. We begin by showing that \whp the algorithm does not encounter a reset. For a reset, a generation with offspring population size $\lambda_{max}$ must be unsuccessful, and therefore, no offspring can increase the \onemax-value. The probability of a single offspring $y$ increasing the number of one-bits is at least $\ZM(y)/(en) \leq n^{1-\eps/4}/(en)$ \cite[Lemma~2.2]{hevia2023success}. Using that $\lambda_{max} \ge n^{\eps}$, the probability of this is at most
        \begin{align*}
            &\left(1-\frac{n^{1-\eps/4}}{en}\right)^{\lambda_{max}}
            \leq \exp{\left(-\frac{\lambda_{max}}{en^{\eps/4}}\right)}
            \leq \exp{\left(-\frac{n^{\eps}}{en^{\eps/4}}\right)}
            = o\left(1/n^3\right).
        \end{align*}
        Recall that we assumed the algorithm makes $o(n\ln{n}/p)$ evaluations. Hence, the algorithm does not encounter a reset \whp.
        
        It remains to show that $\lambda$ does not ``naturally'' drop below $6eF^{16/\eps}\ln{n}$. We introduce the notion of a \emph{phase}. A phase starts as soon as $\lambda$ is reduced below $\gamma$ and ends if it is either increased back to at least $\gamma$ or falls below $6e F^{16/\eps} \ln{n}$. We assume the algorithm does not sample a distorted point during a phase as otherwise, the lemma follows immediately. We want to apply Lemma~\ref{lemma:interval-crossing} with $\alpha \coloneqq 6e F^{16/\eps} \ln{n}$ and $\beta \coloneqq 16/\eps$. To bound the probability of a successful generation, notice that each generation creates at most $\gamma$ offspring and the number of zero-bits is at most $n^{1-\eps/4}$. With a union bound, the probability that a single generation is successful is therefore at most $\gamma n^{1-\eps/4}/n = \gamma n^{-\eps/4} \eqqcolon q$. By Lemma~\ref{lemma:interval-crossing}, the probability of a phase ending due to drop in $\lambda$ below $\alpha$ is at most
        \begin{align*}
            \frac{1/q-2}{\left(1/q-1\right)^{\beta+1}-1}
            = \frac{\gamma^{-1}n^{\eps/4}-2}{\left(\gamma^{-1}n^{\eps/4}-1\right)^{16/\eps+1}-1}
            \leq \frac{n^{\eps/4}}{\left(\gamma^{-1}n^{\eps/4}-1\right)^{16/\eps+1}-1}.
        \end{align*}
        By choosing an appropriate small positive constant $\xi$ such that $\xi < (1- \eps/4)\eps/16$, this is at most
        \begin{align*}
            &\frac{n^{\eps/4}}{\left(\gamma^{-1}n^{\eps/4}-1\right)^{16/\eps+1}-1}
            = \frac{n^{\eps/4}}{\left(n^{\eps/4 - \xi} (\gamma^{-1} n^{\xi})-1\right)^{16/\eps+1}-1}\\
            &\leq \frac{n^{\eps/4}}{\left(n^{\eps/4 - \xi}\right)^{16/\eps+1}-1}
            \leq \frac{n^{\eps/4}}{\left(n^{\eps/4 - \xi}\right)^{16/\eps}}
            = \frac{n^{\eps/4}}{n^{4 - 16\xi/ \eps}}
            < n^{-3}.
        \end{align*}
        Since there are $o(n\ln{n}/p)$ evaluations and therefore also $o(n\ln{n}/p)$ phases, w.h.p. no phase will result in a drop of $\lambda$ below $6eF^{16/\eps}\ln{n}$. 
        
        It remains to show (iii). All points in $I'$ have fitness at most $n- n^{1- \eps/2} + d \leq n- k^\ast$. Therefore, the algorithm can not terminate before crossing the interval. If the algorithm enters and leaves $I'$ several times, we consider the last such time. By \cite[Lemma~3.3]{jorritsma2023comma} the probability of an offspring having a Hamming distance $\Omega(\ln{n})$ to its parent is $n^{-\Omega(\ln{}\ln{n})}$. Thus when the algorithm enters $I'$, \whp it enters in a point $x_t$ such that $\ZM(x_t) \ge n^{1- \eps/4} - O(\ln{n})$. From such a starting point, even the \oea on \onemax requires $\Omega(n \ln{n})$ evaluations to cross $I'$ \cite[Theorem~3.6]{jorritsma2023comma}. By the domination result \cite[Theorem~3.5]{jorritsma2023comma}, the \sear requires at least as many evaluations as the \oea, hence $\Omega(n \ln{n})$ evaluations to cross the interval.
    \qed
    \end{proof}
    Having established that the algorithm enters the set of distorted points $\CD$ with at least logarithmic $\lambda$ in $I'$, we proceed to show that the algorithm will not leave the set $\CD$ before crossing the following interval $I$ as well. We additionally show that the algorithm becomes elitist, \ie, the fitness is not reduced throughout.
    \begin{lemma}
        \label{lemma:stay-in-distorted-points}
        Consider the \sear on \disOM as in Theorem~\ref{thm:main}, in particular, let $\delta$ be such that $p \ge n^{\delta}/ \lambda_{max}$. Let $\eps > 0$ be a small constant such that $k^\ast + d \leq n^{1- \eps}$ and $\eps \leq \delta/4$. Let the current state  $(x_0, \lambda_0)$ satisfy $ZM(x_0) \leq n^{1- \eps/4}$, $x_0$ is distorted and $\lambda_0 \ge 6e F^{16/\eps}\ln{n}$. With high probability the algorithm neither leaves the set of distorted points $\mathcal{D}$ nor decreases the fitness in a single step until either the total number of evaluations is in $\Omega(n \ln{n}/p)$, or the distance to $\vec{1}$ is reduced to at most $n^{1- \eps}$. 
    \end{lemma}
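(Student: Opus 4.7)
The plan is to show that throughout the first $T := \Theta(n\ln n/p) = n^{O(1)}$ function evaluations the lemma asks about (or until the distance to $\vec{1}$ drops below $n^{1-\eps}$, whichever comes first), every generation contains a clone of the current parent with high probability. Inductively, the parent lies in $\CD$, so the clone is itself a distorted offspring; Lemma~\ref{lemma:accept_dist_offspring} then forces the accepted offspring to also be distorted, keeping the algorithm in $\CD$, and its $\disOM$-value is at least that of the clone, which equals the parent's, so the fitness cannot decrease from this step. The two conclusions of the lemma follow by a straightforward induction on the generation together with a union bound over the individual failure events.

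To make the clone event hold uniformly, Lemma~\ref{lemma:prob-no-clone} requires $\lambda_t \ge \alpha := 6e\ln n$ throughout. I would establish this by mimicking part~(ii) of the proof of Lemma~\ref{lemma:big-lambda-distorted}: declare a new phase each time $\lambda$ first drops below $\alpha F^\beta$ with $\beta := 16/\eps$ (so that the initial $\lambda_0 \ge 6eF^{16/\eps}\ln n$ starts exactly at $\alpha F^\beta$), and apply Lemma~\ref{lemma:interval-crossing} to the set $X$ of states whose point is distorted with $\ZM \in [n^{1-\eps}, n^{1-\eps/4}]$. A successful generation in such a state essentially requires a distorted offspring of strictly larger $\OM$, which has probability $O(\ZM/n) = O(n^{-\eps/4})$ per offspring (a clean offspring would need to gain $\Omega(\ln n)$ one-bits, which is $n^{-\omega(1)}$); a union bound over the at most $\alpha F^\beta = O(\ln n)$ offspring inside a phase yields $q = n^{-\eps/4 + o(1)}$, so Lemma~\ref{lemma:interval-crossing} gives probability $n^{-4+o(1)}$ of $\lambda$ dropping below $\alpha$ during a single phase, and a further union bound over at most $T$ phases finishes this step. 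A separate issue is the reset mechanism: at $\lambda = \lambda_{\max}$ an unsuccessful generation would crash $\lambda$ down to $1$. I would rule it out via Lemma~\ref{lemma:prob-p}, showing that a fresh distorted offspring obtained by flipping two zero-bits and one one-bit (which strictly improves $\disOM$) occurs with probability $\Omega(p \cdot \ZM^2/n^2) = \Omega(p\cdot n^{-2\eps})$ per offspring; with $\lambda_{\max} \ge n^\delta/p$ and $\eps \le \delta/4$, the expected number per generation is $\Omega(n^{\delta/2})$, giving reset probability $n^{-\omega(1)}$ per step by concentration, easily handled by a union bound over $T$ steps.

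The main obstacle I anticipate is the usual implicit circularity of such analyses: the bound on $q$ inside a phase, and hence the applicability of Lemma~\ref{lemma:interval-crossing}, depends on remaining in the set $X$ of distorted parents with $\ZM$ in range, which is precisely the invariant the lemma is trying to establish. The resolution, just as in~(ii) of Lemma~\ref{lemma:big-lambda-distorted}, is to decouple the arguments via union bounds rather than a single chained induction: the phase analysis only bounds the probability of a $\lambda$-drop while $X$ is not left, while the clone-plus-Lemma~\ref{lemma:accept_dist_offspring} chain keeps us inside $X$, and each failure mode (phase drop, reset, missing clone, and the $n^{-\omega(1)}$ failure of Lemma~\ref{lemma:accept_dist_offspring}) consumes its own $o(1)$ share of the probability budget.
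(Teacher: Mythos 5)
Your proposal is correct and follows essentially the same route as the paper: a clone in every generation (via Lemma~\ref{lemma:prob-no-clone} and Lemma~\ref{lemma:accept_dist_offspring}) keeps the algorithm distorted and elitist, resets are excluded by a freshness argument in the spirit of Lemma~\ref{lemma:prob-p} giving $\pimp = \Omega(p\,n^{-O(\eps)})$, and natural drops of $\lambda$ are excluded by Lemma~\ref{lemma:interval-crossing} with the same $\alpha$, $\beta$ and essentially the same $q$. The only (immaterial) deviation is that you count fresh improving offspring flipping two zero-bits and one one-bit where the paper flips three zero-bits.
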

    \begin{proof}
        If each generation has a clone among the offspring \whp the algorithm neither leaves $\mathcal{D}$ nor reduces the fitness by Lemma~\ref{lemma:accept_dist_offspring}. To show that each iteration has a clone, assume $\lambda$ does not drop below $6e\ln{n}$. By Lemma~\ref{lemma:prob-no-clone}, a single generation does not have a clone among its offspring with probability at most
        \begin{displaymath}
            \left(1- \frac{n-1}{en}\right)^{6e \ln{n}} 
            \leq n^{-6(n-1)/n}
            \leq n^{-3}.
        \end{displaymath}
        Therefore, the probability of not having a clone among the first $o(n \ln{n}/p)$ generations is $1- o(1)$ by a union bound.
        
        It remains to show that $\lambda$ does not drop below $6e \ln{n}$. We proceed almost identically as in the proof of (ii) in Lemma~\ref{lemma:big-lambda-distorted}.
        We start with the probability of a reset. If $\pimp$ is the probability of a single offspring increasing the fitness \wrt the parent, the probability of a single generation causing a reset is at most $\left(1- \pimp\right)^{\lambda_{max}}$. For an offspring to increase the fitness it must both increase the \onemax-value and be distorted. It might be tempting to bound the probability of an offspring being distorted by $p$, but we need to mind that the noise is frozen. Consequently, we do not get fresh randomness in each step. We circumvent the problem in a similar fashion to Lemma~\ref{lemma:prob-p}.

        There are $\binom{n^{1-\eps}}{3} = \Omega(n^{3-3\eps})$ points in the three-neighborhood of the parent, which additionally increase the \onemax-value by three. For sufficiently small $\eps$ any such offspring is distorted with probability $\Omega(p)$ by an analogous reasoning to Lemma~\ref{lemma:prob-p}. The probability of an offspring falling into this category is 
        \begin{displaymath}
            \binom{n^{1-\eps}}{3} \frac{1}{n^3} \left(1-\frac{1}{n}\right)^{n-3} 
            \ge \left(\frac{n^{1-\eps}}{3n}\right)^3 \left(1-\frac{1}{n}\right)^{n-1} 
            \ge \frac{n^{-3\eps}}{27e}
            = \Omega(n^{-3\eps}).
        \end{displaymath}
        Together this implies that $\pimp = \Omega(p n^{-3\eps})$. Leveraging the relationship $p \ge n^{\delta}/ \lambda_{max} \ge n^{4\eps}/ \lambda_{max}$, the probability of a reset in a single generation is at most 
        \begin{align*}
            &\left(1- \pimp\right)^{\lambda_{max}}
            \leq \exp{\left(-\Omega\left(\frac{n^{4 \eps}}{\lambda_{max}} n^{-3\eps}\right) \lambda_{max}\right)}
            \leq \exp{\left(-\Omega\left(n^{\eps}\right)\right)}.
        \end{align*}
        W.h.p. the algorithm will not encounter a reset during the $o(n \ln{n}/p)$ generations for sufficiently small $\eps$. 

        It remains to show that $\lambda$ does not drop below $6e\ln{n}$ as a consequence of a ``natural'' reduction. Using an identical reasoning to the proof of (i) in Lemma~\ref{lemma:big-lambda-distorted}, with $\alpha \coloneqq 6e \ln{n}$, $\beta \coloneqq 16/\eps$ and $q \coloneqq (6e F^{16/\eps}\ln{n}) n^{- \eps/4} \leq (6e F^{32/\eps} \ln{n}) n^{-\eps/4}$, it is clear that \whp $\lambda$ will not drop below $6e\ln{n}$.
    \qed
    \end{proof}
    As a final step, it remains to show that the algorithm takes $\Omega(n \ln{n}/p)$ evaluations to cross the second interval $I$.
    \begin{lemma}
        \label{lemma:reaching-target-in-distorted-points}
        Let $\eps > 0$ be a small constant such that $k^\ast + d \leq n^{1- \eps}$. Consider the \sear on \disOM and the interval $I \coloneqq [n^{1- \eps}, n^{1- \eps/2}]$ of Hamming distances to $\Vec{1}$. If the algorithm neither leaves the set $\CD$ of distorted points nor decreases the fitness, then with high probability it takes $\Omega(n \ln{n}/p)$ evaluations to cross the interval $I$.
    \end{lemma}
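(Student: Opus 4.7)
The plan is to apply a multiplicative-drift lower bound to $X_g := \ZM(x_g)$, amortised across evaluations, to show that $\Omega(n\ln n / p)$ evaluations are required before $X_g$ can drop from $X_0 \approx n^{1-\eps/2}$ (the state on first entering $I$) to at most $n^{1-\eps}$. Under the lemma's hypothesis, every accepted $x_{g+1}$ lies in $\CD$ and satisfies $\OM(x_{g+1}) \ge \OM(x_g)$, so $X_g$ is a non-increasing, $\mathcal F_g$-adapted process on $I$.

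First I would bound the per-generation drift. Since the accepted offspring is the fittest distorted offspring in the generation,
\[
\E[X_g - X_{g+1} \mid \mathcal F_g] \;\le\; \sum_{i=1}^{\lambda_g} \E\bigl[(\OM(y_i) - \OM(x_g))^+ \cdot \ind{y_i \in \CD} \,\big|\, \mathcal F_g\bigr].
\]
For a single offspring, $(\OM(y_i) - \OM(x_g))^+$ is dominated by the number of flipped zero-bits, with expectation $X_g/n$. To control the distortion indicator I would apply a deferred-decision coupling in the spirit of Lemma~\ref{lemma:prob-p}: for a \emph{fresh} point the marginal probability of lying in $\CD$ is exactly $p$ by independence of $\CD$ and the algorithm's coins, and over the $o(n^3)$ evaluations we may assume have been made (otherwise the lemma is already true), the contribution from revisits is only a $o(1)$ factor. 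Combining these, $\E[X_g - X_{g+1} \mid \mathcal F_g] \le (1+o(1))\, \lambda_g\, p\, X_g/n =: \delta_g X_g$.

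Next, using $-\ln(1-u) \le 2u$ on $[0, 1/2]$ together with a Chernoff truncation that excludes per-generation jumps of $X$ larger than $X_g/2$ (which occur with probability $n^{-\omega(1)}$ by concentration of standard bit mutation), the process $M_g := \ln X_g + 2\sum_{g' < g}\delta_{g'}$ becomes a sub-martingale. Optional stopping at the first generation $T$ with $X_T \le n^{1-\eps}$ yields
\[
2\,(1+o(1))\,\tfrac{p}{n}\,\E\!\left[\sum_{g'<T}\lambda_{g'}\right] \;\ge\; \ln X_0 - \ln n^{1-\eps} \;\ge\; \tfrac{\eps}{2}\ln n,
\]
so the expected number of evaluations $\sum_{g'<T}\lambda_{g'}$ is $\Omega(n\ln n/p)$. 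Azuma-Hoeffding applied to $M_g$, whose increments are $O(1)$ after truncation, or equivalently a tail version of multiplicative drift, upgrades the expectation bound to the desired \whp statement.

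The principal obstacle is the per-generation drift bound: the naive estimate $\Prob[y \in \CD] \le p$ is \emph{not} justified, because the algorithm's trajectory itself depends on $\CD$ and so $\{y \in \CD\}$ is not independent of the history. The deferred-decision coupling separates the two sources of randomness by revealing each point's distortion only at its first query; fresh offspring then contribute the clean factor $p$, and revisits are controlled by the observation that only $o(n^3)$ queries have been made while any search point has $\Theta(n^3)$ neighbours at Hamming distance exactly three, so the (improving, distorted) part of the revisit contribution is absorbed into the $o(1)$ factor of the drift bound.
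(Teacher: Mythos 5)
Your route is genuinely different from the paper's: the paper couples the run to a $(1-p)$-rejection run on \onemax and then imports the $\Omega(n\ln n)$ lower bound for the \oea together with the domination theorem from~\cite{jorritsma2023comma}, whereas you run a direct multiplicative-drift lower bound on $\ZM$, amortised over evaluations. In principle your decomposition of the one-generation progress as $\sum_i (\OM(y_i)-\OM(x_g))^+\ind{y_i\in\CD}$ is sound and would avoid the external domination theorem. However, there is a genuine gap exactly at the point you flag as the principal obstacle: your treatment of revisited offspring does not work. The counting argument behind Lemma~\ref{lemma:prob-p} ($o(n^3)$ queries versus $\Theta(n^3)$ points at Hamming distance three) only yields a \emph{lower} bound on the probability that an offspring is fresh; it gives no upper bound on the drift contributed by revisits, because the query history is not spread uniformly over the $3$-neighbourhood. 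In the regime of this lemma the algorithm spends order $n\ln n/p$ evaluations with $\Theta(n^{1-\eps})$ zero-bits, so each improving one-bit-flip neighbour of the current parent is typically queried $\Theta(\ln n/p)\gg 1$ times; revisits of improving points are the common case, not a vanishing correction. If even a constant fraction of those revisited improving neighbours were distorted, the per-offspring drift would be $\Theta(X_g/n)$ rather than $\Theta(pX_g/n)$, and your bound $\delta_g=(1+o(1))\lambda_g pX_g/n$ would fail by a factor $1/p$ --- which is the entire content of the lemma.

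The missing idea is the observation the paper uses to make its rejection-run coupling exact: under the lemma's hypothesis the fitness is non-decreasing, so if a point $y$ with $\OM(y)>\OM(x_g)$ was already sampled at some earlier generation $g'$ and were distorted, it would have had fitness strictly above $\disOM(x_{g'})$, the algorithm would have selected an offspring of at least that fitness at time $g'$, and monotonicity would force $\disOM(x_g)\ge \disOM(y)$, a contradiction. Hence every previously sampled improving point is clean, the revisit contribution to your drift sum is exactly zero (not merely a $o(1)$ factor), and only fresh offspring contribute, each with the clean factor $p$ via deferred decisions. With that repair your drift bound holds and the rest of your argument (truncation of large jumps, the $\ln X_g$ sub-martingale, optional stopping, and a concentration step to upgrade the expectation to a \whp statement) is standard, though the final Azuma step still needs some care when $\lambda_g$ is close to $\lambda_{\max}$ so that the compensator increments $\delta_g$ are controlled.
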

    \begin{proof}
        Due to the similar nature of the problem on hand, the following part is closely related to the analysis of the lower bound of $T^{plus}$ in \cite[Theorem~1.1]{jorritsma2023comma}. We adopt the notion of a $(1-p)$-rejection run. Such a run differs from a regular one in that each sampled search point is discarded with a probability $1-p$. We show that for any state $(x_t, \lambda_t)$ of the algorithm on \disOM, the probability of increasing the fitness by $r>0$ without leaving $\CD$ is at most the probability of increasing it by $r$ in a $(1-p)$-rejection run on \onemax.
        
        Assume an offspring $y$ satisfies $\OM(y) = \OM(x_t) + r$. If $y$ has been sampled before, it is not distorted as we assumed the algorithm does not decrease the fitness, and therefore, it would have moved to the point the first time it was sampled. Thus, $y$ is not considered for selection with probability $1$. On the other hand, if $y$ has not yet been sampled, it is distorted with probability p. Thus, it is not considered for selection with probability $1-p$. Summarizing, each point increasing the fitness by $r$ is not considered for selection with probability at least $1-p$. In other words, it is rejected with at least this probability.

        It remains to show that a $(1-p)$-rejection run of the algorithm on \onemax takes $\Omega(n \ln{n}/p)$ evaluations. By \cite[Theorem~3.6]{jorritsma2023comma} \whp the \oea takes $\Omega(n \ln{(n^{1- \eps/2}/n^{1- \eps}))} = \Omega(n \ln{n})$ evaluations to cross $I$. Hence a $(1-p)$-rejection run of the same algorithm takes $\Omega(n \ln{n}/p)$ evaluations. By the domination result \cite[Theorem~3.5]{jorritsma2023comma} this implies that the same is true for a $(1-p)$-rejection run of the \sear on \onemax.
    \qed
    \end{proof}
    We now bring everything together and prove Theorem~\ref{thm:main}.
    \begin{proof}[of Theorem~\ref{thm:main}]
        Let $\eps > 0$ be a small constant such that $k^\ast + d \leq n^{1- \eps}$ and $\eps \leq \delta/4$. Consider the intervals $I' \coloneqq [n^{1-\eps/2}, n^{1-\eps/4}]$ and $I \coloneqq [n^{1-\eps}, n^{1-\eps/2}]$ of Hamming distances to $\Vec{1}$. With high probability the initial search point of the algorithm has a distance of at least $n/3$ to $\Vec{1}$. It thus has to cross both intervals to reach the target fitness of $n-k^\ast$. By \cite[Lemma~3.3]{jorritsma2023comma}, the probability of an offspring having Hamming distance at least $c \ln{n}$ to its parent is $n^{-\Omega{(\ln{\ln{n}})}}$. Therefore, the algorithm will not increase the \onemax-value by $\Omega(\ln{n})$ in the first $o(n \ln{n}/p)$ evaluations. As a result, the algorithm will jump over neither of the two intervals (both have size $\omega(\ln{n})$).

        Let a \emph{trial} be defined as a sequence of generations, which starts as soon as the algorithm samples a search point in $I'$ and ends if either
        \begin{enumerate}[(i)]
            \item the algorithm accepts an offspring $y$ with $\ZM(y) > n^{1-\eps/4}$, or 
            \item the algorithm accepts an offspring $y$ with $\ZM(y) < n^{1-\eps/2}$, or 
            \item the algorithm reaches a  state $(x_t, \lambda_t)$ such that $\ZM(x_t) \in I'$, $x_t$ is distorted and $\lambda_t \ge 6e F^{16/\eps}$.
        \end{enumerate}
        If a trial ends due to condition (i), with an analogous argumentation to before, a new trial will start again \whp. Under the assumption of a trial not ending due to condition (i), by Lemma~\ref{lemma:big-lambda-distorted}, \whp the trial will end due to condition (iii), or the number of evaluations is in $\Omega(n \ln{n}/p)$, in which case the theorem would follow immediately.

        From such a state \whp the algorithm will neither decrease the fitness nor leave the set of distorted points $\CD$ before either the total number of evaluations is in $\Omega{(n \ln{n}/p)}$ or the algorithm accepts an offspring $y$ with $\ZM(y) < n^{1- \eps}$, by Lemma~\ref{lemma:stay-in-distorted-points}. This allows us to apply Lemma~\ref{lemma:reaching-target-in-distorted-points}. To cross the interval $I$ the algorithm will require $\Omega(n \ln{n}/p)$ evaluations, which concludes the proof.
    \qed
    \end{proof}
    Corollary~\ref{cor:results} puts our findings in context with \cite{jorritsma2023comma}. Items (\ref{cor:results_item_2}) and (\ref{cor:results_item_3}) are direct results of Theorem~\ref{thm:main} and \cite[Theorem~1.1]{jorritsma2023comma} respectively. Even though (\ref{cor:results_item_1}) is similar to the corresponding result in \cite[Theorem~1.1]{jorritsma2023comma}, it is not a direct consequence. We provide a brief proof sketch.
    \begin{proof}[of Corollary~\ref{cor:results}~(\ref{cor:results_item_1})]
        We define the same intervals $I$ and $I'$ as in the proof of Theorem~\ref{thm:main}. Since even the \oea takes $\Omega(n \ln{n})$ evaluations to cross $I'$, so does the \oclea. With Lemma~\ref{lemma:prob-p}, among these offspring at least one is distorted, which is also accepted \whp by Lemma~\ref{lemma:accept_dist_offspring}. Since no offspring among the first $o(n \ln{n}/p)$ increases the Hamming distance by $\Omega(\ln{n})$, the algorithm will not leave the set of distorted points $\CD$. To cross the second interval $I$, the algorithm requires $\Omega(n \ln{n}/p)$ evaluations with analogous reasoning to Lemma~\ref{lemma:reaching-target-in-distorted-points}.
    \end{proof}
    
\section{Experiments}\label{sec:experiments}
    We corroborate our theoretical results empirically with two sets of experiments\footnote{The code for the experiments can be found at \href{https://github.com/kosturm/EAs-on-Distorted-OneMax}{https://github.com/kosturm/EAs-on-Distorted-OneMax}}. We theoretically showed a lower bound on the runtime $T$ of the \sear of $\Omega(n \ln{n}/p)$, so we plot the normalized runtime $T/((n\ln n)/p)$ in Figure~\ref{fig:plot_diff_p}. We show the mean and the standard deviation for three problem sizes with varying distortion probabilities $p$ over $50$ runs each. The remaining parameters stay unchanged. Indeed, for larger $p$, the curve is almost horizontal, which suggests that the lower bound is tight and $T=\Theta(n\ln n/p)$. For smaller $p$, the runtime becomes irregular, showing that a lower bound on $p$ as in Theorem~\ref{thm:main} is indeed necessary.
\begin{figure}[ht]
    \centering
    \begin{minipage}[t]{0.49\textwidth}
        \centering
        \includegraphics[width=\linewidth]{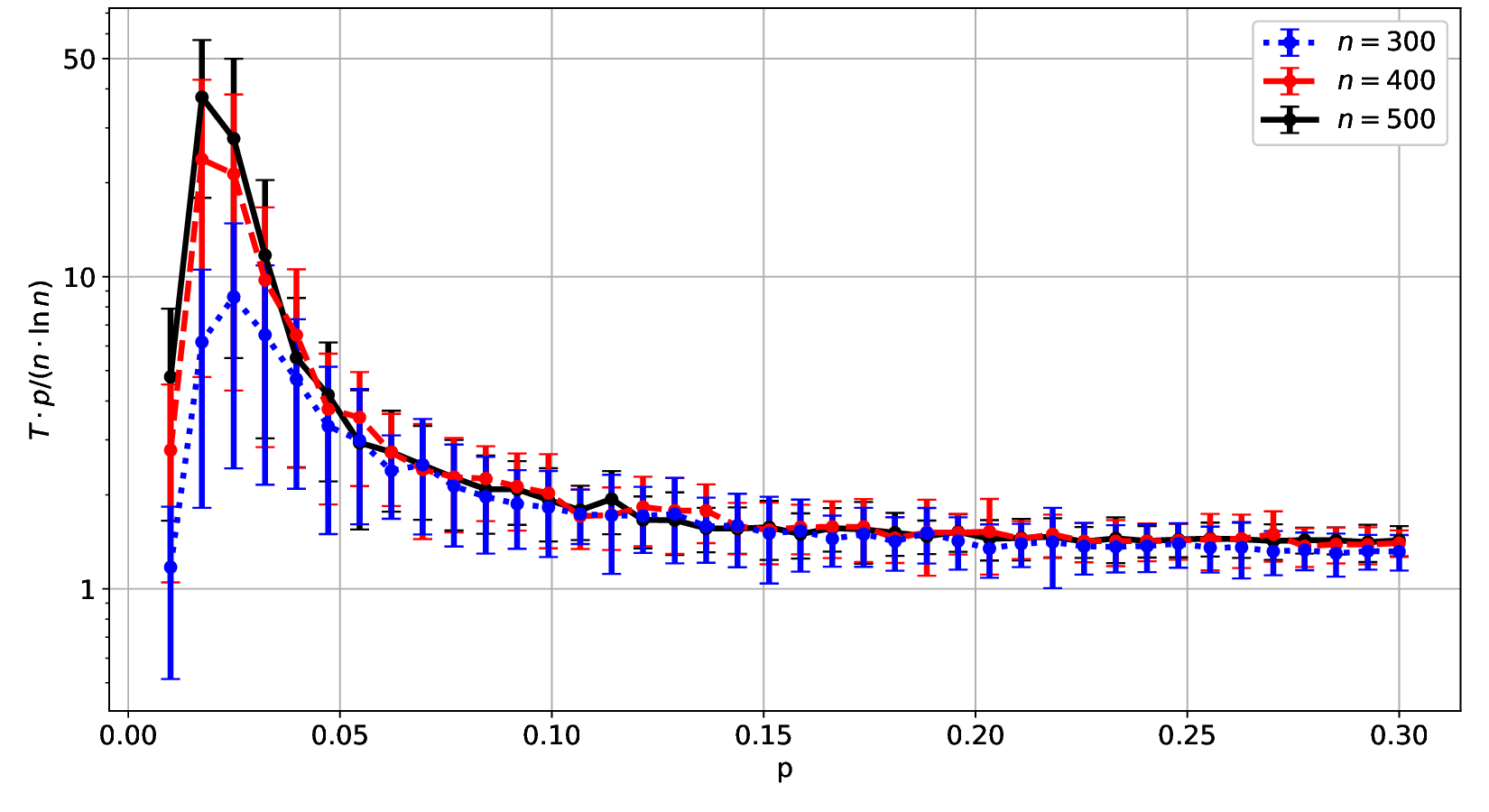}
        \caption{Normalized number of evaluations required by the \sear to optimize \disOM for different distortion probabilities $p$. We set $d= \ln{n}$, $k^\ast = n^{0.4}$, $F=1.5$, $s = 1$, $\lambda_{max} = n \ln{n}$ and average over $50$ runs each. The cutoff of $10^7$ evaluations was never reached. Note that the y-axis shows the number of evaluations $T$ multiplied by $p /(n \ln{n})$ and is scaled logarithmically.}
        \label{fig:plot_diff_p}
    \end{minipage}\hfill
    \begin{minipage}[t]{0.49\textwidth}
        \centering
        \includegraphics[width=\linewidth]{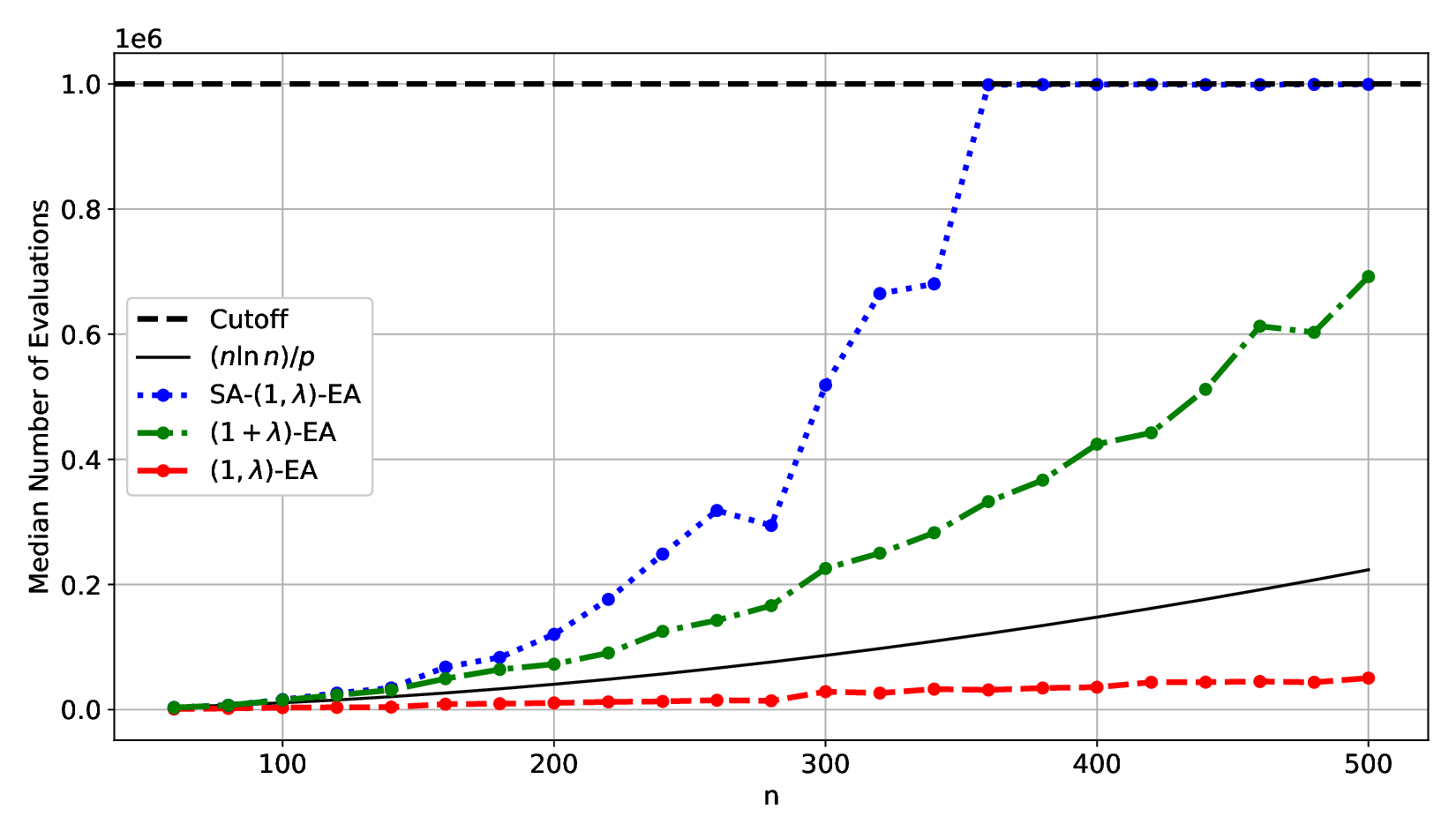}
        \caption{We take the median over 50 runs for the \oclea, the \olea and the \sear. We set $d = \ln{n}$, $k^\ast = n^{0.4}$, $\lambda_{com, plus} = \lfloor 1.5 \ln{n}\rceil$ for the \oclea and the \olea, $p = (e/(e-1))^{-\lambda_{com, plus}}$, $F= 1.5$, $s= 1$, $\lambda_{max} = n \ln{n}$. We make a cutoff at $10^{6}$ evaluations. We additionally plot the curve $n \ln{n}/p$ for reference.}
        \label{fig:plot_algo}
    \end{minipage}
\end{figure}

In Figure~\ref{fig:plot_algo}, we compare the runtime behavior of the \sear, the \oclea, and the \olea. This confirms that the \olea with static $\lambda$ is much faster than the two other algorithms.

\section{Conclusion}
    We have investigated the \sea with resets on \disOM. While this algorithm has been very successful on hill-cimbing tasks and on the multimodal function \cliff, we have shown that this does not extend to the type of local optima that \disOM represents. We believe that it is worthwhile to explore the algorithm on other theoretical benchmarks to understand better in which situations it is slowed down. Candidates include \hurdle, the recently introduced benchmark \BBFunnel~\cite{dang2021non}, and the multimodal landscapes introduced by Jansen and Zarges~\cite{jansen2016example}. Moreover, it is important to explore other self-adaptation mechanisms that may provide alternatives to the resetting mechanism studied in this paper and which may be able to keep the advantages of comma selection for local optima of the type as in \disOM. 

\subsubsection{Acknowledgements}
    This research was strongly influenced by the discussions at the Dagstuhl seminars 22081 ``Theory of Randomized Optimization Heuristics'' and 23332 ``Synergizing Theory and Practice of Automated Algorithm Design for Optimization''.

\bibliographystyle{splncs04}
\bibliography{refs}

\end{document}